\documentclass[11pt]{article}

\usepackage[T1]{fontenc}
\usepackage[utf8]{inputenc}
\usepackage{lmodern}
\usepackage[margin=1in]{geometry}
\usepackage{microtype}
\usepackage{graphicx}
\usepackage{booktabs}
\usepackage{amsmath}
\usepackage{amssymb}
\usepackage{amsthm}
\usepackage{natbib}
\usepackage[font=small,labelfont=bf]{caption}
\PassOptionsToPackage{hyphens}{url}
\usepackage[hidelinks]{hyperref}
\theoremstyle{plain}
\newtheorem{lemma}{Lemma}
\theoremstyle{definition}
\newtheorem{assumption}{Assumption}

\newcommand{\R}{\mathbb{R}}
\newcommand{\dff}{d_{\mathrm{ff}}}
\newcommand{\dint}{d_{\mathrm{int}}}
\newcommand{\What}{\widehat W}
\newcommand{\Wtop}{\widehat W^{\mathrm{top}}}
\newcommand{\Wmet}{\widehat W^{\mathrm{met}}}
\newcommand{\Wsh}{\widehat W^{\mathrm{sh}}}
\newcommand{\PD}{\mathrm{PD}}
\newcommand{\GW}{\mathrm{GW}}
\newcommand{\Norm}{\mathrm{Norm}}
\newcommand{\FFN}{\mathrm{FFN}}

\newcommand{\GromovW}{Gromov-\allowbreak Wasserstein}

\title{Geometry-Guided Layerwise FFN Width Allocation in Transformers}
\author{%
Timur Mudarisov$^{1}$ \qquad
Mikhail Burtsev$^{2}$ \qquad
Radu State$^{1}$\\[0.75em]
\small $^{1}$University of Luxembourg\\
\small $^{2}$London Institute of Mathematical Sciences
}
\date{}
\hypersetup{
  pdftitle={Geometry-Guided Layerwise FFN Width Allocation in Transformers},
  pdfauthor={Timur Mudarisov, Mikhail Burtsev, Radu State}
}

\begin{document}
\maketitle

\begin{abstract}
Feed-forward networks (FFNs) account for a large fraction of Transformer parameters, yet their hidden width is usually constant across depth. We ask whether this capacity can instead be allocated from a forward-pass measurement of layer behavior. We view each FFN as transporting a cloud of token representations and quantify the induced geometric change using correspondence-preserving shift, Gromov-Wasserstein distortion, and degree-one persistent homology under raw and scale-normalized metrics. A layerwise approximation surrogate yields an exact fixed-budget optimizer. Across seven pretrained language models, raw Euclidean work largely tracks residual-norm growth, whereas normalized work is predominantly front-loaded. Gromov-Wasserstein work is more consistently associated with perturbation-based layer sensitivity than the finite-sample topological estimate. In paired 128M and 256M training runs, several normalized-work schedules reduce mean validation loss relative to both uniform width and a hand-designed cosine taper. With the amplified paired differences at 440M, the best geometry-based allocations improve over uniform substantially larger than the cosine taper, while the anti-topological raw control is worse than uniform.
\end{abstract}

\section{Introduction}
\label{sec:intro}
Feed-forward networks are the principal pointwise computation inside a
Transformer block and typically dominate its parameter count
\citep{vaswani2017attention}. Nevertheless, most decoder-only language models
use one global intermediate dimension: every layer receives the same $\dff$.
This design is simple, but it assumes that the amount of useful nonlinear
capacity is uniform even though representations and layer functions change with
depth \citep{elhage2021framework,jastrzebski2018residual}. Uniform width is
therefore an architectural convention rather than an evident optimum.

Tapered Language Models (TLMs) provide direct evidence that this convention can
be improved \citep{tlm2026}. Their 440M Transformer uses the same total
parameter and FLOP budget while reallocating MLP width from later to earlier
layers. A smooth cosine taper reduces their in-distribution validation
perplexity from $16.28$ to $14.44$, and the selected schedule is transferred to
larger scales and other token-mixing architectures. This result establishes
that depth-aware allocation matters. It does not, however, determine the width
of an individual layer from the computation performed at that layer: the cosine
shape and its endpoints remain hand-designed hyperparameters.

A prescribed taper and a measured allocation answer different questions. A
taper asks which low-dimensional family of depth profiles should be searched;
once the family is selected, every model of a given depth receives essentially
the same shape. A measured rule asks which layers of this particular model and
data distribution exhibit the strongest evidence of approximation demand. It
can reproduce a monotone taper when the evidence is monotone, but it can also
retain local peaks or plateaus that a cosine necessarily smooths away. This
distinction matters for transfer: a hand-designed schedule is cheap and does
not require a reference checkpoint, whereas an activation-derived schedule can
adapt to architecture, scale, and pretraining distribution. Our experiments
therefore use cosine as a strong structural baseline.

We ask whether activations of a reference model can provide a data-dependent
schedule. At FFN layer $\ell$, the residual states of a sequence form a point
cloud $Z_\ell$, and the residual update maps it to a cloud $H_\ell$. We call the
change between the two clouds the layer's \emph{geometric work}. The definition
must be aligned with the FFN readout. LayerNorm and RMSNorm largely remove
positive per-token rescaling before the branch is evaluated
\citep{ba2016layer,zhang2019root,xiong2020layernorm}. Raw pairwise-distance and
persistence statistics are therefore directly scale-sensitive. By contrast,
the raw correspondence-preserving shift equals the magnitude of the FFN branch
output and is not algebraically multiplied by the residual norm, although the
two quantities may covary empirically across depth. We compare raw Euclidean
geometry with spherical and hyperbolic metrics built from unit-normalized
states, and distinguish pointwise motion from changes in relational and
topological structure.

To turn a work profile into widths, we introduce a layerwise approximation
surrogate. A latent coefficient $C_\ell$ represents how difficult the FFN map is
to approximate on the representation distribution, $\lambda_\ell$ represents
the loss consequence of an approximation error, and $\dint(\ell)$ controls the
error-width exponent. A constrained optimizer then assigns the fixed total FFN
budget. Geometric work $\What_\ell$ is used as a proxy for $C_\ell$. This proxy
is a modeling hypothesis rather than a consequence of topology or optimal
transport: the present experiments diagnose its ingredients and test the
resulting schedules end to end, but do not directly fit layerwise error-width
curves.

The paper makes four contributions. First, it defines three complementary
forward-pass work statistics under three geometries, separating raw magnitude
from scale-normalized reshaping. Second, it derives the unique continuous
budget-optimal width allocation, including lower bounds and the
depth-dependent-exponent case in which a single proportionality constant is not
valid; hardware rounding is treated separately and preserves the budget but is
not claimed to solve the discrete optimization problem. Third, it evaluates
scale sensitivity, local loss additivity, intrinsic dimension, sensitivity
association, and work profiles across seven pretrained decoders. Fourth, it
compares geometry-derived schedules with uniform width, a TLM-style cosine
taper, and an anti-work control at fixed parameter count. The strongest
experiment is the five-seed 440M comparison, where topological/hyperbolic
allocation has a substantially larger mean within-protocol loss reduction than
cosine.

\section{Theory}
\label{sec:theory}

\subsection{FFN Sublayers as Token Transport}
A decoder block processes $N$ token states in $\R^d$. Let
$Z_\ell=\{z_\ell^{(i)}\}_{i=1}^{N}$ denote the post-attention residual cloud at
block $\ell$. 
The FFN update is
\begin{equation}
\label{eq:ffn-transport}
h_\ell^{(i)}=z_\ell^{(i)}+
\FFN_\ell\!\left(\Norm(z_\ell^{(i)})\right),
\qquad
H_\ell=\{h_\ell^{(i)}\}_{i=1}^{N}.
\end{equation}
Thus the network can be viewed as a sequence of residual transports
\citep{jastrzebski2018residual}. For an embedding
$\pi:\R^d\rightarrow(\mathcal M,\rho)$ and a cloud comparison $D$, define
\begin{equation}
\label{eq:work-general}
\What_\ell=D\!\left(\pi(Z_\ell),\pi(H_\ell)\right).
\end{equation}
For positive $c$, LayerNorm and RMSNorm satisfy
$\Norm(cz)\simeq\Norm(z)$ up to the numerical stabilizer. This invariance applies
to the branch input rather than to the full residual state, but it motivates a
readout-aligned criterion: a work statistic should not be dominated by
variations that the FFN input normalization largely removes. Raw work is
therefore retained as a control, while normalized metrics are the primary
candidates for allocation.

\subsection{Geometries and Work Statistics}
We evaluate the same input/output clouds in three metric spaces. For a nonzero
state define $u(z)=z/\lVert z\rVert$. \emph{Raw geometry} uses the identity map
and Euclidean distance. \emph{Spherical geometry} uses $u(z)$ and angular
distance
\begin{equation}
\rho_{\mathrm{sph}}(z,z')=
\arccos\!\left(\left\langle u(z),u(z')\right\rangle\right).
\end{equation}
This unit normalization is a scale-invariant proxy inspired by branch-input
normalization but it is not identical to full LayerNorm, which recenters the vector
and may apply learned coordinatewise gains.

\emph{Hyperbolic geometry} uses the unit Poincar\'e ball
$\mathbb B^d=\{x\in\R^d:\lVert x\rVert<1\}$ with curvature $-1$ and ball
radius $R=1$. Because $u(z)$ lies on the boundary, we clip it one percent inward:
\begin{equation}
\label{eq:hyp-map}
\pi_{\mathrm{hyp}}(z)=\rho_c u(z),
\qquad \rho_c=1-0.01=0.99.
\end{equation}
The geodesic distance is
\begin{equation}
\label{eq:hyp-distance}
d_{\mathbb B}(x,y)=\operatorname{arcosh}\!\left(
1+\frac{2\lVert x-y\rVert^2}
{(1-\lVert x\rVert^2)(1-\lVert y\rVert^2)}\right).
\end{equation}
All embedded points therefore have the same radial coordinate $0.99$. If
$\alpha$ is their spherical angle, then
\begin{equation}
\label{eq:hyp-angular}
d_{\mathbb B}(\rho_c u,\rho_c v)=
\operatorname{arcosh}\!\left(
1+\frac{8\rho_c^2\sin^2(\alpha/2)}{(1-\rho_c^2)^2}\right).
\end{equation}
Thus this hyperbolic construction is a fixed nonlinear monotone transform of
angular separation, and it does not introduce a learned radial hierarchy. We use it
as an empirical reweighting of angular distances rather than as evidence that
the token cloud is intrinsically hierarchical.

For each geometry, we compute three notions of work. The first keeps the known
token correspondence and measures mean shift,
\begin{equation}
\label{eq:wshift}
\Wsh_\ell=\frac{1}{N}\sum_{i=1}^{N}
\rho\!\left(\pi z_\ell^{(i)},\pi h_\ell^{(i)}\right).
\end{equation}
This statistic is simple and directly measures update magnitude, but it does not
distinguish a nearly rigid transport from a change in the internal shape of the
cloud. In raw geometry, Eq.~\eqref{eq:ffn-transport} gives
\begin{equation}
\label{eq:raw-shift-identity}
\Wsh_\ell=\frac{1}{N}\sum_{i=1}^N
\left\lVert\FFN_\ell\!\left(\Norm(z_\ell^{(i)})\right)\right\rVert.
\end{equation}
Consequently, under $\Norm(cz)\simeq\Norm(z)$, positive rescaling of the
residual state does not algebraically rescale raw mean shift for a fixed FFN.
Any depthwise association between raw shift and residual norm is therefore an
empirical co-variation of branch-output magnitude, not a direct metric-scale
confound.

The second statistic measures metric distortion. With uniform token weights
$\mu_i=1/N$, let $A_{ij}=\rho(z_i,z_j)$ and $B_{km}=\rho(h_k,h_m)$. We use the
entropically regularized Sinkhorn approximation to the quadratic
\GromovW{} objective $\Wmet_\ell$,
\begin{equation}
\label{eq:wmet}
\begin{aligned}
&\GW_{2,\varepsilon}^{2,\mathrm{Sink}}(A,B):=\min_{T\in\Pi(\mu,\mu)}
\sum_{i,j,k,m}\lvert A_{ij}-B_{km}\rvert^2T_{ik}T_{jm} \\
&+\varepsilon_{\mathrm{GW}}\,
\mathrm{KL}\!\left(T\,\middle\|\,\mu\otimes\mu\right),
\end{aligned}
\end{equation}
where $\varepsilon_{\mathrm{GW}}>0$ is fixed across layers, geometries, and
models, and the coupling is optimized by Sinkhorn iterations
\citep{memoli2011gromov,peyre2019computational}. We report the regularized
quadratic objective itself: Eq.~\eqref{eq:wmet} is a squared-discrepancy
convention, not its square root and not a debiased Sinkhorn divergence. The
optimized coupling treats the clouds as unlabeled metric-measure spaces and
asks whether pairwise relations can be preserved after allowing a coupling,
whereas Eq.~\eqref{eq:wshift} is the correspondence-preserving baseline.

The third statistic focuses on ordinary degree-one persistent homology with
coefficients in $\mathbb F_2$. For a finite metric cloud, the closed
Vietoris--Rips filtration includes a simplex at scale $t$ when all its pairwise
distances are at most $t$. We compute the full finite $H_1$ filtration up to the
cloud diameter, without filtration clipping or truncation. Ordinary and reduced
homology coincide in degree one. Let $\PD_1(Z_\ell)$ and $\PD_1(H_\ell)$ denote
the resulting finite diagrams, with the diagonal available with infinite
multiplicity. We define
\begin{equation}
\label{eq:wtop}
\Wtop_\ell=
\mathcal W_2\!\left(\PD_1(Z_\ell),\PD_1(H_\ell)\right),
\end{equation}
where $\mathcal W_2$ is the unsquared $2$-Wasserstein distance using the
$\ell_\infty$ ground metric in the birth--death plane:
\begin{equation}
\mathcal W_2(P,Q)=
\left[\inf_{\gamma}\sum_{p\in P}
\lVert p-\gamma(p)\rVert_\infty^2\right]^{1/2}.
\end{equation}
The bijection $\gamma$ may match points to the diagonal, so an empty diagram is
handled by matching every off-diagonal point of the other diagram to its nearest
diagonal point. This statistic responds to the creation or destruction of
loops, not to component merging, which belongs to $H_0$
\citep{edelsbrunner2010computational,boissonnat2018geometric}. Classical
Vietoris--Rips stability is naturally stated using interleaving/Gromov--Hausdorff
control and bottleneck distance. Here $\mathcal W_2$ is used as a finite-sample
summary on finite diagrams rather than claimed to satisfy an unrestricted
universal stability theorem \citep{cohensteiner2007stability}. 

\subsection{Approximation Surrogate}
Let $w_\ell=\dff(\ell)$ be the width assigned to FFN layer $\ell$, let
$f_\ell(z)=\FFN_\ell(\Norm(z))$, and let $\widetilde f_{\ell,w}$ denote a
width-$w$ approximation to the trained branch map. The approximation error in
the surrogate is the expected squared branch-output error on the calibration
distribution $P_\ell$,
\begin{equation}
\label{eq:epsilon-definition}
\epsilon_\ell(w)=
\mathbb E_{z\sim P_\ell}\!\left[
\lVert f_\ell(z)-\widetilde f_{\ell,w}(z)\rVert_2^2\right].
\end{equation}
Thus $C_\ell$ has the units of squared branch-output error times
$w^{a_\ell}$, while $\lambda_\ell$ converts that local error to a loss effect.
We separate the latent approximation coefficient $C_\ell$ from its observed
geometric proxy $\What_\ell$ through three assumptions.

\begin{assumption}[Local additive loss]
\label{ass:additive}
Near a trained solution, the loss effect of layerwise approximation errors is
locally additive:
$\Delta\mathcal L\approx\sum_\ell\lambda_\ell\epsilon_\ell$, where
$\lambda_\ell\geq0$ is the sensitivity of layer $\ell$.
\end{assumption}

\begin{assumption}[Intrinsic error-width rate]
\label{ass:rate}
The residual cloud at layer $\ell$ concentrates near a set of intrinsic
dimension $\dint(\ell)\ll d$, and the FFN approximation error obeys
\begin{equation}
\label{eq:rate}
\epsilon_\ell(w_\ell)\approx
C_\ell w_\ell^{-a_\ell},
\qquad
a_\ell=\frac{s}{\dint(\ell)}>0,
\end{equation}
where $s$ is a global expansion-order parameter. All experiments use the fixed
value $s=3$.
\end{assumption}
This form follows the dependence of nonparametric approximation rates on
intrinsic rather than ambient dimension
\citep{gyorfi2002distribution,tsybakov2009introduction,yarotsky2017error,
chen2019manifold,nakada2020intrinsic}. It is an ansatz for Transformer FFNs, not
a theorem about their learned functions.

\begin{assumption}[Geometric proxy]
\label{ass:proxy}
On the distribution of residual states used for calibration, the measured work
$\What_\ell$ is informative about $C_\ell$, so that substituting
$C_\ell\leftarrow\What_\ell$ preserves enough of the layerwise ordering to
construct a useful allocation.
\end{assumption}
A smooth map can alter the observed cloud substantially, and a complex map can
approximately preserve it, so Assumption~\ref{ass:proxy} is necessarily
model- and distribution-dependent. A direct test would fit error-width curves
for individual layers. Without that experiment, correlations with sensitivity
and end-to-end training are indirect evidence only; moreover, $C_\ell$ and
$\lambda_\ell$ are distinct quantities and need not be strongly correlated.

\subsection{Budget-Optimal Width}
Combining Assumptions~\ref{ass:additive} and \ref{ass:rate} gives the
lower-bounded continuous surrogate
\begin{equation}
\label{eq:program}
\min_{w_\ell\geq w_{\min}}
\sum_{\ell=1}^{L}\lambda_\ell C_\ell w_\ell^{-a_\ell}
\quad\text{subject to}\quad
\sum_{\ell=1}^{L}w_\ell=B.
\end{equation}
The feasibility condition is $B\geq Lw_{\min}$. When hidden dimension and FFN
parameterization are fixed across layers, the width budget is proportional to
both FFN parameters and FFN FLOPs.

\begin{lemma}[Lower-bounded budget-optimal FFN width]
\label{lem:width}
Let $c_\ell=\lambda_\ell C_\ell\geq0$, $a_\ell>0$, $w_{\min}>0$, and
$B\geq Lw_{\min}$. If $B>Lw_{\min}$, assume that at least one $c_\ell>0$.
Then Problem~\eqref{eq:program} has the unique continuous minimizer
\begin{equation}
\label{eq:kkt-law}
w_\ell^\star=
\max\!\left\{w_{\min},
\left(\frac{a_\ell c_\ell}{\mu^\star}\right)^{1/(a_\ell+1)}\right\}.
\end{equation}
For $B>Lw_{\min}$, $\mu^\star>0$ is the unique value satisfying
$\sum_\ell w_\ell^\star=B$. At $B=Lw_{\min}$, the allocation
$w_\ell^\star=w_{\min}$ is unique, although the KKT multiplier need not be.
If every $c_\ell>0$ and the lower bound is inactive, Eq.~\eqref{eq:kkt-law}
reduces to
\begin{equation}
\label{eq:general-law}
w_\ell^\star=
\left(\frac{a_\ell c_\ell}{\mu^\star}\right)^{1/(a_\ell+1)}.
\end{equation}
If additionally $a_\ell=a$ for all layers and $\theta=1/(a+1)$, then
\begin{equation}
\label{eq:closed-law}
w_\ell^\star=
B\frac{(\lambda_\ell C_\ell)^\theta}
{\sum_k(\lambda_k C_k)^\theta}.
\end{equation}
\end{lemma}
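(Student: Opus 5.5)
The plan is to treat Problem~\eqref{eq:program} as a separable convex program over a compact feasible set and read off the minimizer from its KKT conditions. On the set $\{w:w_\ell\ge w_{\min},\ \sum_\ell w_\ell=B\}$, which is a nonempty simplex-like polytope since $B\ge Lw_{\min}$, each term $g_\ell(w)=c_\ell w^{-a_\ell}$ is continuous. Hence a minimizer exists. Each $g_\ell$ is convex, and strictly convex when $c_\ell>0$, because $g_\ell''=c_\ell a_\ell(a_\ell+1)w^{-a_\ell-2}$. The constraints are affine, so Slater-type qualification holds and the KKT conditions are necessary and sufficient. The degenerate case $B=Lw_{\min}$ is immediate: the feasible set is the single point $w_\ell=w_{\min}$, so the allocation is unique. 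Any $\mu\ge \max_\ell a_\ell c_\ell w_{\min}^{-a_\ell-1}$ serves as a multiplier, which is why the multiplier need not be unique.

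For $B>Lw_{\min}$, I would write the Lagrangian with multiplier $\mu$ for the budget and $\nu_\ell\ge0$ for the lower bounds. Stationarity gives $a_\ell c_\ell w_\ell^{-a_\ell-1}=\mu-\nu_\ell$, with complementary slackness $\nu_\ell(w_\ell-w_{\min})=0$. I would then split on whether the bound is active.
\begin{itemize}
\item If $w_\ell>w_{\min}$, then $w_\ell=(a_\ell c_\ell/\mu)^{1/(a_\ell+1)}$, and this value exceeds $w_{\min}$.
\item If $w_\ell=w_{\min}$, then $\nu_\ell\ge0$ gives $a_\ell c_\ell w_{\min}^{-a_\ell-1}\le\mu$, i.e.\ $(a_\ell c_\ell/\mu)^{1/(a_\ell+1)}\le w_{\min}$.
\end{itemize}
Together the two cases give exactly Eq.~\eqref{eq:kkt-law} whenever $\mu>0$.

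Next I must show $\mu^\star>0$ and that it is unique. Since some $c_\ell>0$, stationarity forces $\mu\ge a_\ell c_\ell w_\ell^{-a_\ell-1}>0$ for that $\ell$. Conversely, if $\mu\le0$, all layers would need $c_\ell=0$ and the derivative conditions collapse. Define $S(\mu)=\sum_\ell\max\{w_{\min},(a_\ell c_\ell/\mu)^{1/(a_\ell+1)}\}$ on $(0,\infty)$. This function is continuous and nonincreasing. It tends to $Lw_{\min}<B$ as $\mu\to\infty$ and to $+\infty$ as $\mu\to0^+$, because of the layer with $c_\ell>0$. It is strictly decreasing wherever at least one positive-$c$ layer is above the floor, which holds whenever $S(\mu)>Lw_{\min}$. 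So $S(\mu)=B$ has exactly one root, and this gives uniqueness of $\mu^\star$.

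For uniqueness of the allocation I expect the main obstacle to be the layers with $c_\ell=0$: there the objective is flat, so strict convexity fails in those coordinates. My approach is to show such layers must sit at $w_{\min}$. If a zero-cost layer had $w_\ell>w_{\min}$, moving mass from it to a positive-$c$ layer would strictly decrease the objective. Equivalently, stationarity with $\mu>0$ and $c_\ell=0$ forces $\nu_\ell=\mu>0$, hence $w_\ell=w_{\min}$. Strict convexity in the remaining coordinates then pins down the rest. I would argue via the fact that the midpoint of two distinct minimizers would have strictly smaller value unless they agree on all positive-$c$ coordinates. The zero-$c$ coordinates are already fixed at $w_{\min}$, so the two minimizers coincide.

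The final reductions are then routine. If every $c_\ell>0$ and no bound is active, the max in Eq.~\eqref{eq:kkt-law} drops out, which gives Eq.~\eqref{eq:general-law}. If additionally $a_\ell=a$, then $w_\ell^\star=(a/\mu^\star)^\theta c_\ell^\theta$. Imposing $\sum_\ell w_\ell^\star=B$ gives $(a/\mu^\star)^\theta=B/\sum_k c_k^\theta$, which yields Eq.~\eqref{eq:closed-law}.
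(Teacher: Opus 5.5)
Your proposal is correct and follows essentially the same route as the paper's proof sketch: compactness and convexity give existence; a transfer argument (or $\nu_\ell=\mu>0$) fixes the zero-coefficient layers at $w_{\min}$; strict convexity on the positive-coefficient coordinates gives uniqueness; the KKT stationarity conditions with clamping give Eq.~\eqref{eq:kkt-law}; and a continuous budget function that is strictly decreasing above $Lw_{\min}$ gives a unique $\mu^\star$. You add useful detail the sketch omits, namely why $\mu^\star>0$ and the explicit set of multipliers at $B=Lw_{\min}$.
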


\begin{proof}[Proof sketch]
The feasible set is compact when $B\geq Lw_{\min}$, and the objective is
continuous and convex. If $B>Lw_{\min}$ and at least one coefficient is
positive, every zero-coefficient layer is optimally placed at $w_{\min}$;
otherwise transferring width from it to a positive-coefficient layer strictly
reduces the objective. The objective is strictly convex on the remaining
positive-coefficient coordinates, which gives a unique allocation. The KKT
condition for an active coordinate is
$-a_\ell c_\ell w_\ell^{-a_\ell-1}+\mu=0$; an inactive coordinate is clamped at
the lower bound, yielding Eq.~\eqref{eq:kkt-law}. For $B>Lw_{\min}$, the sum of
the right-hand side is continuous and strictly decreasing at the budget level,
so it determines a unique $\mu^\star$. At $B=Lw_{\min}$ the feasible set is a
single point. If all $c_\ell=0$ and $B>Lw_{\min}$, the objective is constant and
the optimizer is not unique; the practical rule then falls back to uniform
allocation.
\end{proof}

With $a_\ell=s/\dint(\ell)$ and $s=3$, define
$\theta_\ell=\dint(\ell)/(3+\dint(\ell))$. For depth-dependent intrinsic
dimension, Eq.~\eqref{eq:kkt-law} must be solved with a shared multiplier; one
cannot in general write
$w_\ell\propto(\lambda_\ell C_\ell)^{\theta_\ell}$ using a single
layer-independent proportionality constant because the multiplier is raised to
different powers. One-dimensional bisection finds $\mu^\star$.

Equation~\eqref{eq:closed-law} also clarifies the role of the exponent. As
$\theta\rightarrow0$, the optimizer approaches uniform width even when work
varies strongly; as $\theta\rightarrow1$, width becomes nearly proportional
to the sensitivity-weighted complexity. Intermediate values compress noisy work
estimates. Uniform allocation is thus a limiting strongly regularized case,
while a hand-designed taper can be viewed as replacing
$(\lambda_\ell C_\ell)^\theta$ by a prescribed depth prior. The
geometry-guided schedule instead estimates that factor from activations and
applies the exact continuous budget normalization.
The continuous solution is subsequently rounded to hardware-compatible
multiples and rebalanced to recover the exact total parameter budget.

\subsection{From a Reference Model to a Schedule}
The practical geometry-guided rule substitutes
$C_\ell\leftarrow\What_\ell$. The experiments use the pure-work variant
$\lambda_\ell=1$ and the fixed expansion-order parameter $s=3$; perturbation
sensitivity is kept independent so that it can serve as a diagnostic rather
than a tuning signal. A zero work value is admissible because the continuous
program includes $w_{\min}>0$: the corresponding layer is assigned the floor by
Eq.~\eqref{eq:kkt-law}. If an entire profile is identically zero, the practical
fallback is uniform allocation.

A schedule is constructed by (1) collecting FFN input/output clouds from a
uniform-width reference model, (2) estimating and aggregating the work profile,
(3) interpolating that profile to the target depth when needed, (4) solving the
lower-bounded continuous fixed-budget program, and (5) rounding and rebalancing
the widths. The target model is then trained from scratch. This procedure
assumes that the relative layer roles measured in the reference model remain
informative after the architecture is reallocated.

\section{Experiments}
\label{sec:experiments}

\subsection{Experimental Protocols}
\paragraph{Pretrained-model diagnostics.}
We analyze seven decoder-only checkpoints: Llama 3.2 at 1B and 3B parameters and
Llama 3.1 at 8B \citep{dubey2024llama}; Gemma 2 at 2B and 9B
\citep{gemmateam2024gemma2}; Gemma 3 at 1B \citep{gemmateam2025gemma3}; and
Mistral 7B v0.3 \citep{jiang2023mistral}. A held-out OpenWebText sample
\citep{gokaslan2019openwebtext} is passed through each model, and the post-attention
FFN input and output clouds are recorded. Raw statistics use the original
residual states. Spherical and hyperbolic statistics use the corresponding
normalized embeddings. Persistence and \GromovW{} computations use the same
fixed token subsample so that differences are not caused by different examples.

\paragraph{Perturbation diagnostics.}
Layer sensitivity is estimated by low-rank FFN perturbations. For each layer we
remove increasing rank fractions and record the validation-loss increase; the
local slope of the small-perturbation region defines $\lambda_\ell$. To probe
Assumption~\ref{ass:additive}, sets of layers are perturbed jointly and their
loss increase is compared with the sum of the corresponding single-layer
increases.

Intrinsic dimension is estimated independently at each relative depth using
TwoNN \citep{facco2017twonn}. For every point $x_i$ in the layer-$\ell$ point
cloud, let $r_{i,1}$ and $r_{i,2}$ denote its distances to the first and second
nearest neighbors, and define
\[
    \mu_i = \frac{r_{i,2}}{r_{i,1}}.
\]
Under the local-uniformity model of TwoNN, the cumulative distribution satisfies
\[
    F(\mu)=1-\mu^{-d}.
\]
After sorting the ratios, we estimate $\dint(\ell)$ as the slope through the
origin in
\[
    -\log\!\left(1-\widehat F(\mu_{(i)})\right)
    \approx
    \dint(\ell)\log\mu_{(i)}.
\]
Equivalently, writing
$x_i=\log\mu_{(i)}$ and
$y_i=-\log(1-\widehat F(\mu_{(i)}))$, the fitted slope is
\[
    \widehat{\dint}(\ell)
    =
    \frac{\sum_{i\in\mathcal I}x_i y_i}
         {\sum_{i\in\mathcal I}x_i^2},
\]
where $\mathcal I$ and the empirical-CDF plotting positions are specified
below. The resulting TwoNN estimates are used directly, without clipping,
winsorization, or smoothing, and the fixed value $s=3$ gives
$a_\ell=3/\widehat{\dint}(\ell)$. These measurements do not train alternative
architectures and therefore provide inexpensive diagnostics of the surrogate.
\paragraph{From-scratch allocation runs.}
We train approximately 128M and 256M Transformers on byte-level-BPE-tokenized
OpenWebText \citep{radford2019language}. The comparison includes all nine
work/geometry combinations, uniform width, the TLM-style cosine taper, and an
anti-topological/raw control. Each size uses three paired seeds: within a seed,
all rules share initialization and data order. Every architecture has the same
total parameter count, token budget, and AdamW configuration, and final
validation loss is the primary endpoint. The 440M experiment allocates the
larger compute budget to five paired seeds for a focused set of rules: uniform,
cosine, topological/hyperbolic, Gromov/spherical, and anti-topological/raw.

\begin{figure*}[t]
\centering
\includegraphics[width=\linewidth]{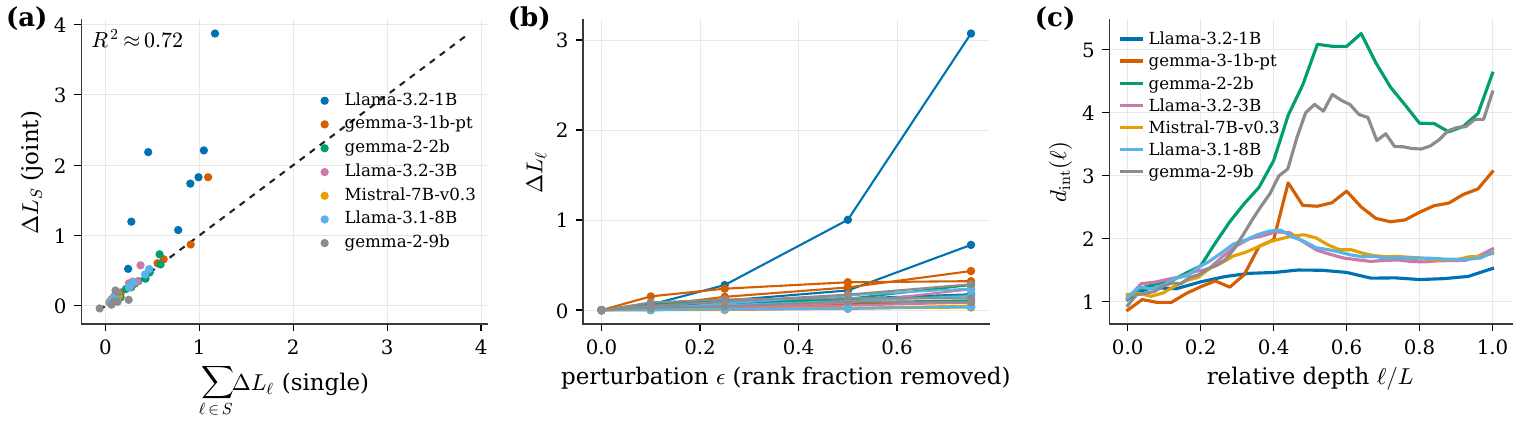}
\caption{\textbf{Diagnostics for the allocation surrogate.}
\textbf{(a)} Joint perturbation losses are associated with the sum of the
corresponding single-layer losses. \textbf{(b)} Small rank-removal
perturbations are approximately linear in the removed rank fraction.
\textbf{(c)} TwoNN intrinsic dimension varies smoothly with relative depth.
These experiments diagnose the local surrogate; they do not directly calibrate
geometric work to an error-width coefficient.}
\label{fig:diagnostics}
\end{figure*}

The anti-topological/raw control reverses a topological profile while also
changing to raw geometry. It tests whether a deliberately misaligned rule can
fail in the expected direction, but it is not a fully matched reversal that
isolates layer order while holding the estimator and width histogram fixed.

\paragraph{Reporting and comparison logic.}
For each rule and seed define the paired loss difference
$D_s=L_{\mathrm{rule},s}-L_{\mathrm{uniform},s}$.For the 128M and 256M screens, Fig.~\ref{fig:allocation} reports the mean paired difference $\bar D$ and descriptive normal-approximation intervals $\bar D \pm 1.96\,s_D/\sqrt{n}$, with $n=3$. Because only three paired seeds are available, these intervals are used as exploratory summaries rather than as a basis for formal hypothesis testing. These intervals are descriptive rather than a basis for selecting a definitive winner among eleven alternatives. The 440M experiment narrows the comparison to five schedules and
uses five paired seeds. Table~\ref{tab:440m-results} reports marginal seed means
and standard deviations, together with $\bar D$; marginal standard deviations
do not determine $s_D$ because they omit the within-seed covariance. Therefore,
until the five paired differences (or $s_D$, its standard error, and a
$t_4$ interval) are supplied from the run logs, the 440M comparisons are stated
as differences in mean loss rather than inferential claims of superiority. All
qualitative claims distinguish a within-protocol comparison from an absolute
comparison with values reported by another paper.

\subsection{Diagnostics of the Allocation Model}

\paragraph{Scale sensitivity.}
The first experiment asks whether work measures geometry or merely residual
scale. Raw Gromov and persistence work grow strongly in late layers and track
residual-norm growth, as expected because their pairwise distances and
filtration scales change under Euclidean rescaling. Per-token normalization
largely removes that late-depth growth. Raw mean shift also tends to rise in the
observed profiles, but Eq.~\eqref{eq:raw-shift-identity} shows that this is not a
direct metric-scale effect: it is an empirical co-variation between residual
norm and FFN-output magnitude. The contrast is visible across work families in
Fig.~\ref{fig:profiles}. We therefore treat raw work as a control, while keeping
the causal interpretation statistic-specific.

\paragraph{Local additivity and linearity.}
The joint perturbation loss is positively associated with the sum of the
corresponding single-layer losses, with $R^2\approx0.72$
(Fig.~\ref{fig:diagnostics}, left). The rank-removal curves are approximately
linear for small and moderate perturbations and become nonlinear for the most
aggressive interventions in several models. This is the regime expected for a
first-order surrogate, so $\lambda_\ell$ is estimated only from the local part
of the curve. The correlation does not by itself prove equality to the identity
line, but it supports using an additive approximation for schedule derivation.

\paragraph{Intrinsic dimension.}
TwoNN estimates stay far below the ambient hidden dimension and change smoothly
with depth (Fig.~\ref{fig:diagnostics}, right), consistent with prior evidence
that learned representations occupy low-dimensional sets
\citep{ansuini2019intrinsic}. Early layers generally have the smallest estimates,
while some larger Gemma models reach values around $4$--$5$ in the middle of the
stack. The estimates shown and used by the allocator are the unmodified TwoNN
outputs: no clipping or smoothing is applied. With the fixed expansion-order
parameter $s=3$, $\theta_\ell=\dint(\ell)/(3+\dint(\ell))$ remains in an
intermediate range and changes smoothly. The exponent therefore compresses
large differences in measured work rather than transferring them one-for-one
to width. This supports the use of an intrinsic-dimensional exponent, while not
establishing the exact rate in Eq.~\eqref{eq:rate}.

\paragraph{Sensitivity association.}
After standardization within each model, Gromov work has a stronger association
with $\lambda_\ell$ than topological work in five of seven models; the pooled
association of the current topological estimator is weak. The result argues
against claiming that topology universally predicts layer importance. It also
does not invalidate topological allocation: $\lambda_\ell$ measures the loss
consequence of an approximation error, whereas $C_\ell$ is intended to measure
the difficulty of realizing the layer map. The two factors enter the surrogate
multiplicatively and can rank estimators differently.

Taken together, the diagnostics support using a normalized, regularized profile
and the local additive objective, but they do not establish
$C_\ell=\What_\ell$. This boundary is important for interpreting the later
training experiments: a successful schedule is evidence that the proxy carries
useful ordering information, not a measurement of the true approximation
seminorm or a validation of a universal nonparametric exponent.

\subsection{Work Profiles Across Depth}

\begin{figure}[t]
\centering
\includegraphics[width=0.99\linewidth]{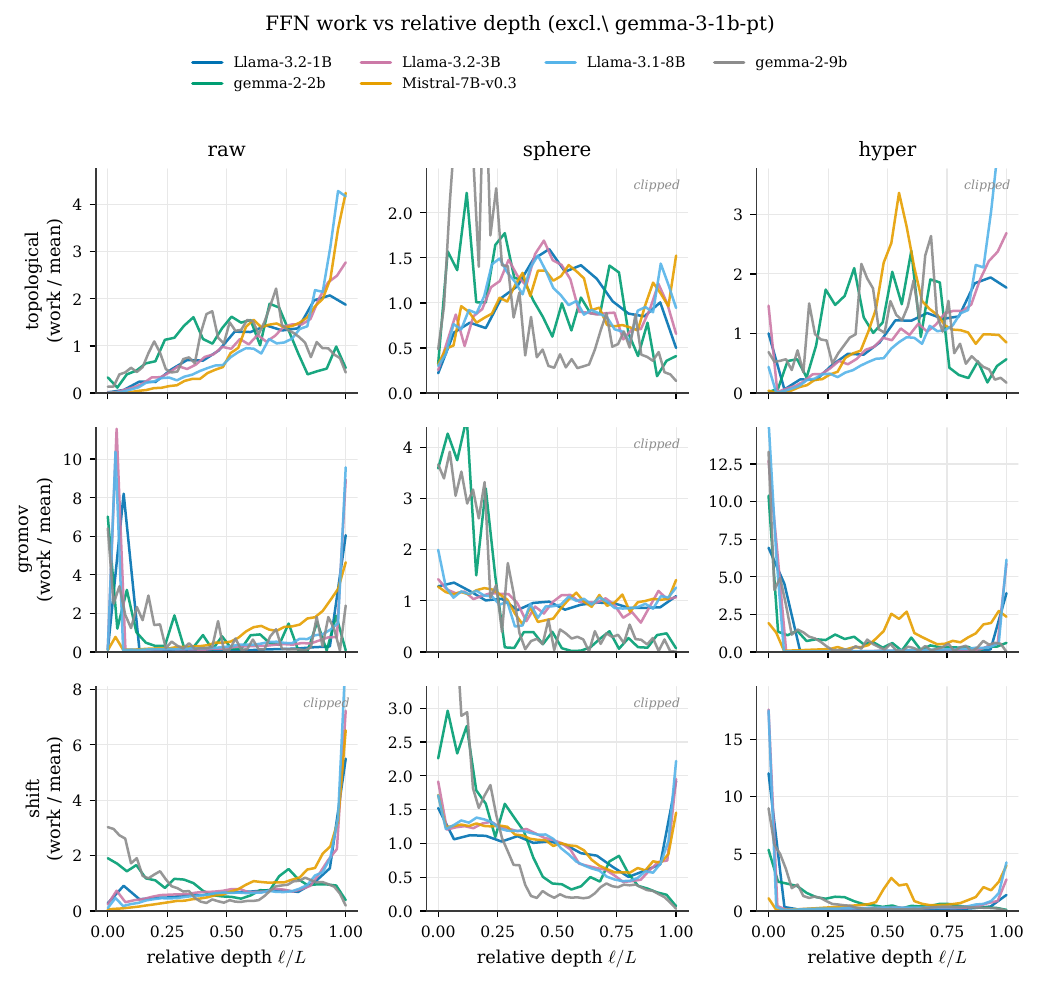}
\caption{\textbf{FFN work across relative depth.} Three work statistics (rows)
and three geometries (columns) are shown for the six models with complete
$3\times3$ profiles; each curve is normalized to unit mean. Raw Gromov and
topological work are directly scale-sensitive and rise with residual scale.
Raw shift also rises empirically in these checkpoints, but is not algebraically
rescaled by residual norm. Spherical and hyperbolic work are predominantly
front-loaded and can depart from a fixed monotone taper.}
\label{fig:profiles}
\end{figure}

Figure~\ref{fig:profiles} compares the depth profile after normalizing each
model/statistic curve to unit mean. In raw geometry, topological and Gromov work
tend to increase near the end of the network, consistent with direct
scale-sensitivity of pairwise distances and persistence filtrations. Raw shift
also tends to increase, but this is interpreted only as empirical co-variation
of FFN-output magnitude with residual norm. Under spherical and hyperbolic
geometry, the ordering mostly reverses: work is high in the earliest layers,
decreases through the middle of the stack, and sometimes rises mildly near the
end. The qualitative front-loading transfers across model families, but the
local deviations differ by estimator and checkpoint. Thus the measurements
support the broad direction identified by TLM while supplying a layer-specific,
potentially non-monotone profile rather than imposing a cosine shape.

Relative depth $\ell/L$ is used because the checkpoints have different numbers
of blocks. Unit-mean normalization removes the arbitrary scale of each
estimator and makes the experiment about allocation shape; the total width is
set later by the budget constraint. The cross-family agreement is strongest for
the early peak and weaker for fine structure near the middle and final blocks.
That pattern motivates transferring a smoothed profile rather than copying
individual noisy layer values. It also explains why geometry and tapering are
complementary: both favor earlier capacity, but only the measured profile can
preserve repeatable departures from monotonicity.

\begin{figure}[h]
\centering
\includegraphics[width=0.92\linewidth]{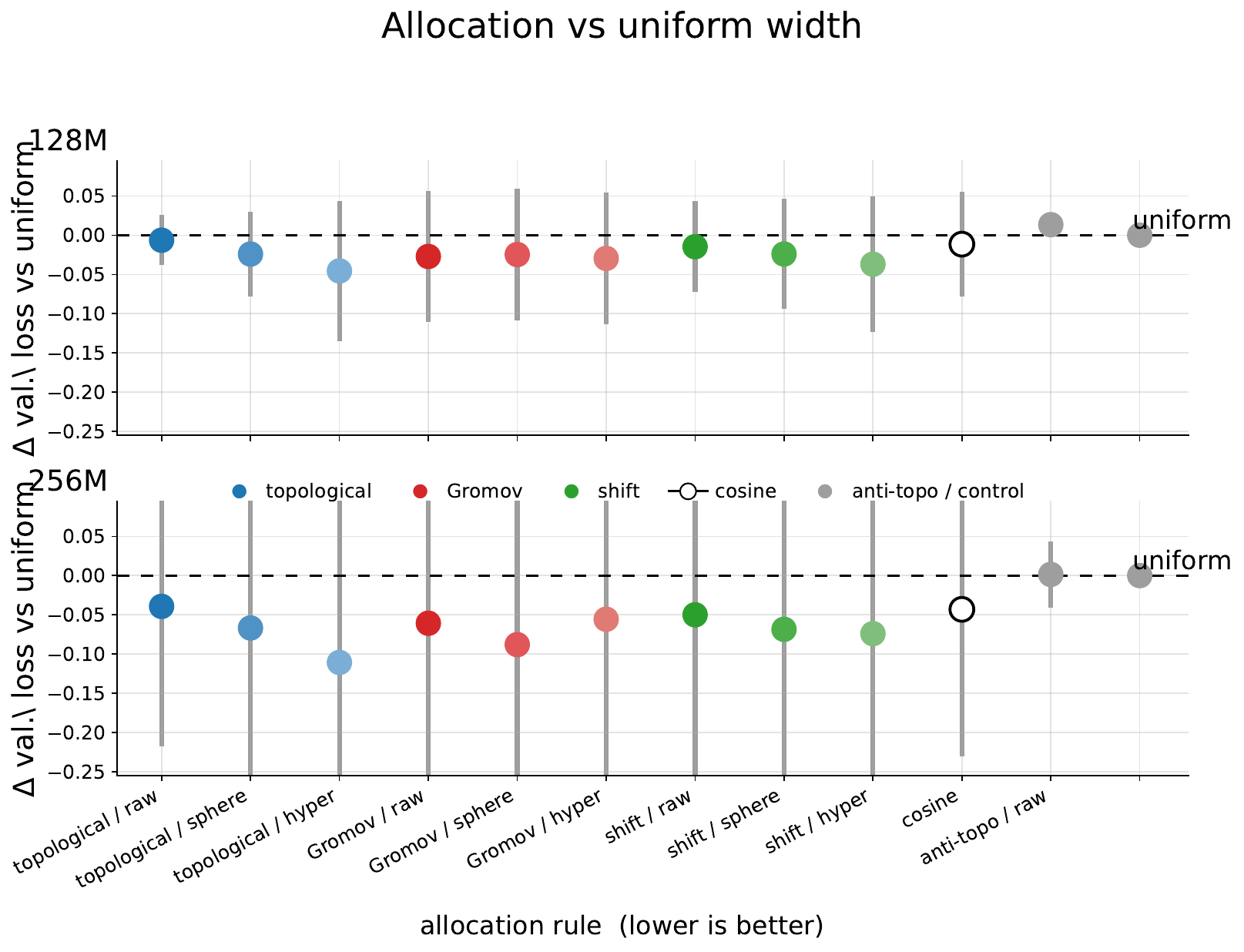}
\caption{\textbf{Allocation rules compared with uniform width.} Paired change
in final validation loss for the 128M and 256M iso-parameter runs, negative is
better. Points are means over three paired seeds and bars are 95\% confidence
intervals.}
\label{fig:allocation}
\end{figure}

\subsection{Paired 128M and 256M Training}

Figure~\ref{fig:allocation} evaluates the full $3\times3$ work design space at
two tractable model sizes. Most schedules based on normalized geometry have
negative mean deltas relative to their seed-matched uniform baselines.
Topological work in hyperbolic geometry has the lowest mean validation loss
among the tested configurations; Gromov/spherical and shift/hyperbolic also
improve on average. The TLM-style cosine schedule remains close to uniform,
while the anti-topological/raw control is centered near or above zero.

These runs show that the profile contains information beyond parameter count:
every rule uses the same FFN budget, and pairing removes initialization and data
order from the within-seed comparison. At the same time, three seeds are too few
for a definitive estimator ranking. The 95\% intervals overlap zero and one
another, so the small-scale result is best interpreted as a broad screen that
identifies normalized schedules for the more focused larger experiment. The
fact that Gromov is strongest in the sensitivity diagnostic while topological
work has the best training mean reinforces that sensitivity association and
allocation quality are related but non-equivalent tests.

These results covering all combinations of statistics and geometries allow to disentangle two design choices, the first one is whether scale normalization improves allocation, and the second is how much correspondence and relational structure the estimator should preserve. Normalized geometries perform more consistently favorably than raw geometry, whereas no single work statistic is best across all settings. The results therefore provide stronger support for normalization as a general design principle than for any particular topological or transport-based estimator. Because this comparison is exploratory, the larger experiment evaluates a representative set of leading rules rather than choosing a separate winner for each seed or model size.


\subsection{440M Comparison with TLM}

TLM uses a 440M Transformer to select a hand-designed taper and reports that its
best cosine schedule improves validation perplexity from $16.28$ to $14.44$ in
its training setup \citep{tlm2026}. We include the same model scale and treat the
cosine schedule as a direct baseline within our own protocol. Unlike an absolute
cross-paper comparison, this within-protocol experiment controls the data,
tokenizer, optimization, parameter count, and random seeds across allocation
rules.

Table~\ref{tab:440m-results} gives the focused descriptive comparison in this
study. The TLM-style cosine taper has mean validation loss $0.003$ below uniform.
Topological/hyperbolic allocation has mean loss $0.019$ below uniform,
approximately $6.3\times$ the mean reduction of cosine, while Gromov/spherical
has mean loss $0.016$ below uniform. The two geometry-derived schedules
therefore have lower mean validation loss than the prescribed cosine baseline
in the same five-seed experiment. The anti-topological/raw control moves in the
opposite direction and has higher mean loss and perplexity. These statements do
not establish statistical separation because the paired-difference variability
is not reported in the current table.

\begin{table}[h]
\centering
\small
\setlength{\tabcolsep}{3pt}
\caption{\textbf{Five-seed 440M iso-parameter results.} Validation loss is
reported as the marginal mean $\pm$ marginal standard deviation over five paired
seeds. Perplexity is $\exp(\text{mean loss})$, and $\Delta=\bar D$ is the mean
paired loss difference relative to uniform (equivalently, the arithmetic
difference of means under complete pairing). The table does not contain the
standard deviation or confidence interval of $D_s$, so it supports descriptive
mean comparisons rather than an inferential superiority claim. Lower is better.}
\label{tab:440m-results}
\begin{tabular}{@{}lccc@{}}
\toprule
Rule & Val. loss $\downarrow$ & PPL $\downarrow$ & $\Delta$ loss $\downarrow$ \\
\midrule
uniform & $3.449\pm0.022$ & $31.47$ & - \\
TLM cosine & $3.446\pm0.022$ & $31.37$ & $-0.003$ \\
topological / hyper & $\mathbf{3.430\pm0.023}$ & $\mathbf{30.88}$ & $\mathbf{-0.019}$ \\
Gromov / sphere & $3.433\pm0.028$ & $30.97$ & $-0.016$ \\
anti-topological / raw & $3.515\pm0.098$ & $33.62$ & $+0.066$ \\
\bottomrule
\end{tabular}
\end{table}

The ratio is a within-protocol descriptive statement, not a claim that the
absolute losses are directly comparable with the perplexities reported by TLM:
the two studies use different training pipelines. The relevant connection is
methodological. TLM shows that front-loading capacity can work; the present
matched experiment indicates that an activation-derived profile can have a
larger mean reduction than a fixed cosine. Five paired seeds are more
informative than the three-seed 128M/256M screen, but sample count alone does not
establish stability, and the paired interval is needed for inferential claims.

\section{Discussion}
\label{sec:discussion}

The experiments support two levels of conclusion. First, FFN capacity should not
be assumed uniform. This agrees with TLM and with the observation that layer
updates change their role across depth. Second, a fixed monotone taper is not the
only useful alternative. Normalized work profiles recover the broad early-heavy
shape but preserve model- and layer-specific structure, and the five-seed 440M
comparison shows a substantially larger mean loss reduction for the
geometry-derived schedules than for cosine under identical conditions. This is
a descriptive statement about means until paired-difference uncertainty is
reported.

The results do not establish one universally superior geometry. Gromov work is
more consistently associated with perturbation sensitivity, whereas
topological/hyperbolic allocation obtains the best training mean. This split is
plausible under the theory: sensitivity $\lambda_\ell$ and approximation
coefficient $C_\ell$ play different roles, and the training outcome also depends
on profile smoothing, rounding, and interactions introduced by retraining. A
conservative reading is therefore that normalized geometric change is useful,
not that persistent homology is always the correct estimator.

The central limitation is the absence of a direct layerwise width sweep. Such an
experiment would test whether $\What_\ell$ predicts the coefficient of an
error-width curve, rather than relying on sensitivity association and
end-to-end outcomes. The geometric statistics are also finite-sample estimates as
persistence and optimal transport can have different variance and computational
cost. The fixed-radius hyperbolic embedding is only a nonlinear transform of
angular distance and does not test a learned radial hierarchy. The schedule is
measured on a reference model and transferred to a target model trained from
scratch, so profile stability under architectural change is assumed. Finally,
the anti-work control changes both direction and geometry, and therefore does
not isolate alignment as cleanly as a matched reversed or permuted normalized
profile would. The three-seed small-scale intervals remain wide. The five-seed
440M means are more informative, but the current table omits uncertainty of the
paired differences and therefore does not justify inferential or fine-grained
claims about estimator ordering.

\section{Conclusion}
Geometric work provides a principled way to replace a global FFN-width constant
with a fixed-budget, layer-dependent schedule. Raw pairwise and topological
statistics are scale-sensitive, whereas raw mean shift measures FFN-output
magnitude and only empirically covaries with residual scale. Normalized metric
and topological changes reveal a transferable early-heavy profile. Across
paired training runs, geometry-derived schedules have lower mean validation
loss than uniform width in several comparisons and, at 440M, have a larger mean
reduction than the TLM-style cosine baseline within the same five-seed protocol.
Paired-difference intervals are required before interpreting these mean gaps as
statistical separation. These findings position activation-derived capacity
allocation as a practical extension of tapering and motivate direct calibration
of geometric work to layerwise approximation demand.

\clearpage
\appendix
\section{Experimental Details and How the Figures Are Produced}
\label{app:experimental-details}

This appendix documents the experiments of the main paper in reproducible detail and specifies exactly how each figure and table is produced: what quantity is on each axis, how it is estimated, what normalization and aggregation are applied, and what the error bars mean. It contains no new claims; it expands the experimental protocols so that every panel can be regenerated from the released code. Section~\ref{app:shared-setup} gives the shared setup (data, models, estimators, budgets, seeds, hardware, and statistics); Section~\ref{app:figure-production} describes each figure and table panel by panel.

\subsection{Experimental Details}
\label{app:shared-setup}

\subsubsection{Data and Preprocessing}
All measurements and all training use OpenWebText \citep{gokaslan2019openwebtext}. For the pretrained-model diagnostics we stream the corpus and pass a held-out sample through each network in inference mode (bfloat16, scaled dot-product attention, no gradient computation), recording for every FFN sublayer the post-attention input cloud $Z_\ell$ and the post-FFN output cloud $H_\ell$. Raw statistics use the original residual states; spherical and hyperbolic statistics use the corresponding normalized embeddings. Persistence and \GromovW{} statistics for a given layer use the same fixed token subsample for $Z_\ell$ and $H_\ell$, so that a measured difference reflects the action of the sublayer rather than a change of examples. For the from-scratch runs, OpenWebText is tokenized with a byte-level BPE tokenizer \citep{radford2019language} and packed into a flat token stream; training and validation shards are disjoint at the document level.

\subsubsection{Pretrained Models}
Seven decoder-only checkpoints spanning three families and a range of sizes are used: Llama-3.2-1B, Llama-3.2-3B, and Llama-3.1-8B \citep{dubey2024llama}; Gemma-2-2B and Gemma-2-9B \citep{gemmateam2024gemma2}; Gemma-3-1B \citep{gemmateam2025gemma3}; and Mistral-7B-v0.3 \citep{jiang2023mistral}. Depth profiles are plotted against relative depth $\ell/L$ because the checkpoints have different numbers of blocks. Six of the seven have complete $3\times3$ work profiles and are the ones shown in the profile grid.

\subsubsection{Work Statistics and Geometries}
For each geometry we compute three notions of work on the matched clouds. With $u(z)=z/\lVert z\rVert$, raw geometry uses the identity map and Euclidean distance; spherical geometry uses $u(z)$ and the angular distance $\arccos\langle u(z),u(z')\rangle$; hyperbolic geometry uses a fixed monotone radial transform into the unit Poincar\'e ball with its geodesic distance. The three statistics are the topological work
$\Wtop_\ell=W_2(\PD_1(Z_\ell),\PD_1(H_\ell))$, where $W_2$ is the unsquared 2-Wasserstein distance between degree-one persistence diagrams under the $\ell_\infty$ birth--death ground metric (diagonal matching handles empty diagrams); the metric work $\Wmet_\ell$, a fixed-regularization \GromovW{} discrepancy between the intra-cloud distance matrices of $Z_\ell$ and $H_\ell$; and the shift work $\Wsh_\ell$, the mean transported distance between matched input and output points under the chosen geometry. Persistence is computed from the Vietoris--Rips filtration of the distance matrix at degree one. Every curve in the profile figure is normalized to unit mean over depth, which removes the arbitrary scale of each estimator and makes the comparison about allocation shape; the total width is set separately by the budget constraint.

\subsubsection{Perturbation Diagnostics}
Layer sensitivity $\lambda_\ell$ is estimated by low-rank FFN perturbations. For a layer we remove increasing rank fractions from the FFN branch and record the resulting increase in validation loss; $\lambda_\ell$ is the slope of the small-perturbation (locally linear) part of that curve. To probe the local additive-loss assumption, sets of layers are perturbed jointly and the joint loss increase is compared against the sum of the corresponding single-layer increases. These diagnostics do not train alternative FFN widths and therefore do not calibrate geometric work to an error--width coefficient; they test the local surrogate only.

\subsubsection{Intrinsic Dimension}
Intrinsic dimension is estimated independently at each relative depth with TwoNN \citep{facco2017twonn}. For each point $x_i$ in the layer-$\ell$ cloud, with $r_{i,1}$ and $r_{i,2}$ its first two nearest-neighbour distances, let $\mu_i=r_{i,2}/r_{i,1}$. Under the TwoNN local-uniformity model $F(\mu)=1-\mu^{-d}$, $\dint(\ell)$ is the origin-slope of $-\log(1-\widehat F(\mu_{(i)}))$ against $\log\mu_{(i)}$ on the sorted ratios. The TwoNN outputs are used directly: no clipping, winsorization, or smoothing. With the fixed global expansion order $s=3$, the exponent is $\theta_\ell=\dint(\ell)/(3+\dint(\ell))$ and $a_\ell=3/\dint(\ell)$.

\subsubsection{From-Scratch Allocation Runs}
We train approximately 128M and 256M Transformers on tokenized OpenWebText. The 128M/256M design space is the full set of nine work/geometry combinations, plus uniform width, the TLM-style cosine taper, and an anti-topological/raw control (eleven rules in total). Each size uses three paired seeds: within a seed, all rules share initialization and data order, so a within-seed comparison removes those nuisance factors. Every rule at a given size has the same total parameter count, the same token budget, and the same AdamW configuration, so the comparison is iso-parameter and iso-token; the final validation loss is the primary endpoint. The 440M experiment spends a larger compute budget on five paired seeds for a focused set of five rules: uniform, cosine, topological/hyperbolic, Gromov/spherical, and anti-topological/raw. The anti-topological/raw control reverses a topological profile and simultaneously switches to raw geometry; it tests whether a deliberately misaligned rule fails in the expected direction, but it is not a fully matched reversal that isolates layer order while holding the estimator and width histogram fixed.

\subsubsection{Width Schedules from a Reference Profile}
A measured, unit-mean work profile is smoothed across relative depth and mapped to a width schedule by the budget-optimal rule of the main paper, so that $\sum_\ell w_\ell$ equals the uniform budget. Widths are quantized to the architecture's width granularity and floored at a minimum width; the quantization residual is redistributed so that the total width matches the uniform baseline exactly, which is what makes every rule iso-parameter by construction rather than approximately. Profiles are transferred by relative depth $\ell/L$ because the reference and target models differ in depth, and a smoothed profile is transferred rather than individual noisy layer values.

\subsubsection{Computing Infrastructure}
All experiments run on a single NVIDIA RTX A6000 (GA102; 48 GB GDDR6 with ECC; approximately 768 GB/s memory bandwidth; 300 W board power; PCIe 3.0 $\times16$ in this host); no multi-GPU, model-parallel, or sharded training is used. At the training batch used, the largest configuration occupies roughly 26 GiB of the 48 GiB available, so no activation checkpointing or offloading is required. On a CUDA out-of-memory event the trainer halves the microbatch and raises gradient accumulation to keep the tokens per optimizer step unchanged, turning memory pressure into reduced throughput rather than a failed run. Between runs, GPU memory is released and downloaded model and dataset artifacts are purged from the local cache; runs are checkpointed and resumable.

\subsubsection{Randomness and Seeds}
Python, NumPy, and PyTorch seeds (including all CUDA devices) are set at the start of each run. Within a seed, initialization and data order are shared across all allocation rules, so each rule-versus-uniform comparison is exactly paired and isolates the width profile. The 128M and 256M screens use three paired seeds each; the 440M experiment uses five paired seeds. The pretrained-model diagnostics draw a single fixed token subsample per layer.

\subsubsection{Evaluation Metrics and Reporting}
The endpoint is final validation loss (cross-entropy in nats); perplexity is $\exp(\text{mean loss})$. For a rule and seed we define the paired difference $D_s=L_{\mathrm{rule},s}-L_{\mathrm{uniform},s}$; $\bar D$ is its mean over seeds. For the 128M/256M screens we report $\bar D$ with descriptive normal-approximation intervals $\bar D\pm1.96s_D/\sqrt n$ at $n=3$; because only three paired seeds are available, these intervals are exploratory summaries, not a basis for formal testing or for selecting a single winner among eleven alternatives. For the 440M experiment the table reports marginal seed means and standard deviations together with $\bar D$; marginal standard deviations do not determine the standard deviation of $D_s$ because they omit the within-seed covariance, so the 440M comparisons are stated as differences in mean loss rather than as inferential superiority claims until the paired-difference variability is supplied from the run logs. All qualitative claims distinguish a within-protocol comparison from an absolute comparison against numbers reported by another paper.

\subsection{How Each Figure and Table Is Produced}
\label{app:figure-production}

\subsubsection{Diagnostics of the Allocation Surrogate}
The three-panel diagnostic figure (Figure~\ref{fig:diagnostics}; source identifier \texttt{fig\_assumptions\_abc}) visualizes the three assumptions behind the width rule; each panel is a scatter or profile over the pretrained models of Section~\ref{app:shared-setup}.

\paragraph{Panel (a): local additivity.}
Each point is one joint perturbation of a set of layers. Its $x$-value is the sum of the single-layer validation-loss increases of the layers in that set (each measured separately as above); its $y$-value is the validation-loss increase when those layers are perturbed jointly. The identity line $y=x$ is exact additivity. The reported association is $R^2\approx0.72$; the panel shows a positive association with the identity line rather than exact equality, which is the evidence used to justify an additive objective for schedule derivation. Because a point is a set of layers, the panel contains many more points than there are layers, and the perturbation strength is held in the locally linear regime (panel (b)) so that the scatter reflects cross-layer interaction rather than within-layer nonlinearity.

\paragraph{Panel (b): local linearity of rank removal.}
For a layer, the curve plots the validation-loss increase (vertical axis) against the removed rank fraction of the FFN branch (horizontal axis). Curves are shown per layer and are approximately linear for small and moderate removal, becoming nonlinear for the most aggressive interventions in several models. $\lambda_\ell$ is read as the slope of the linear part only; the panel's purpose is to show that this linear region exists and to delimit the perturbation strength used in panel (a).

\paragraph{Panel (c): intrinsic dimension versus depth.}
For each model, the curve plots the TwoNN estimate $\dint(\ell)$ (vertical axis) against relative depth $\ell/L$ (horizontal axis). Values are the unmodified TwoNN outputs. The panel shows that $\dint$ stays far below the ambient hidden dimension, changes smoothly with depth, is smallest in early layers, and reaches roughly 4--5 in the middle of the larger Gemma stacks. This motivates the intrinsic-dimensional exponent $\theta_\ell=\dint(\ell)/(3+\dint(\ell))$, which compresses large differences in measured work rather than transferring them one-for-one to width.

\paragraph{Optional companion panels.}
Where the additivity and dimension panels are shown separately (source identifiers \texttt{fig\_e\_additivity} and \texttt{fig\_e\_dim}), the axes and estimators are identical to panels (a) and (c) above.

\subsubsection{FFN Work Across Relative Depth}
The profile grid (Figure~\ref{fig:profiles}; source identifier \texttt{fig\_ffn\_grid\_clean}) is a $3\times3$ array: rows are the three work statistics ($\Wtop$, $\Wmet$, and $\Wsh$) and columns are the three geometries (raw, spherical, and hyperbolic). Within each cell, one curve per model plots that statistic against relative depth $\ell/L$; only the six models with complete $3\times3$ profiles are drawn. Every curve is normalized to unit mean over depth, so the panels compare shape, not magnitude. The figure is read for two contrasts. First, raw Gromov and raw topological work rise toward the end of the network, tracking residual-norm growth, because their pairwise distances and filtration scales change under Euclidean rescaling; raw shift also rises, but as an empirical co-variation of FFN-output magnitude with residual norm rather than an algebraic rescaling. Second, under spherical and hyperbolic geometry the ordering largely reverses: work is front-loaded (high in the earliest layers, decreasing through the middle, occasionally rising mildly near the end) and can depart from a fixed monotone taper. The cross-family agreement is strongest for the early peak and weakest for fine structure near the middle and final blocks, which is why a smoothed profile is transferred rather than individual noisy layer values.

\subsubsection{Allocation Rules Versus Uniform (128M and 256M)}
The forest plot (Figure~\ref{fig:allocation}; source identifier \texttt{figure6\_allocation\_vs\_uniform\_forest}) evaluates the full eleven-rule design space at the two smaller sizes. Each row is one allocation rule. The plotted point is the mean paired change in final validation loss relative to the seed-matched uniform baseline, $\bar D=\operatorname{mean}_s(L_{\mathrm{rule},s}-L_{\mathrm{uniform},s})$, over three paired seeds; more negative is better. The bars are the descriptive 95\% normal-approximation intervals $\bar D\pm1.96s_D/\sqrt3$ described above. The 128M and 256M sizes are shown as grouped rows so that the same rule can be compared across the two scales. Because the intervals overlap zero and one another, the panel is read as an exploratory screen that identifies normalized-geometry schedules as consistently favorable and the anti-topological/raw control as centered near or above zero, not as a definitive ranking of estimators.

\subsubsection{Five-Seed 440M Comparison with TLM}
Table~\ref{tab:440m-results} reports, for five rules (uniform, cosine, topological/hyperbolic, Gromov/spherical, and anti-topological/raw), the marginal validation-loss mean $\pm$ marginal standard deviation over five paired seeds, the perplexity $\exp(\text{mean loss})$, and $\Delta=\bar D$, the mean paired loss difference relative to uniform. Under complete pairing $\Delta$ equals the arithmetic difference of the two reported means. The table deliberately omits the standard deviation and confidence interval of the paired difference $D_s$, because the marginal standard deviation does not determine it (the within-seed covariance is not reported); accordingly the 440M results support descriptive mean comparisons and are not stated as inferential superiority claims. The cosine schedule is treated as a direct in-protocol baseline for the same 440M scale used by TLM, with data, tokenizer, optimization, parameter count, and seeds controlled across rules; the comparison is within-protocol and is not an absolute comparison against the perplexities reported by TLM, whose training pipeline differs.

\bibliographystyle{plainnat}
\bibliography{references}

\end{document}